\newcommand {\out}[1]{}
\newcommand{\changed}[1]
\DeclarePairedDelimiterX{\norm}[1]{\lVert}{\rVert}{#1}
\newtheorem{theorem}{Theorem}
\begin{document}
\title{Scalable Personalised Item Ranking \\ through Parametric Density Estimation}

\author{Riku Togashi}
\affiliation{
  \institution{CyberAgent, Inc., Waseda University}
  \city{Tokyo} 
  \country{Japan}
}
\email{rtogashi@acm.org}

\author{Masahiro Kato}
\affiliation{
  \institution{CyberAgent, Inc.}
  \city{Tokyo} 
  \country{Japan}
}
\email{masahiro_kato@cyberagent.co.jp}

\author{Mayu Otani}
\affiliation{
  \institution{CyberAgent, Inc.}
  \city{Tokyo} 
  \country{Japan}
}
\email{otani_mayu@cyberagent.co.jp}

\author{Tetsuya Sakai}
\affiliation{
  \institution{Waseda University}
  \city{Tokyo} 
  \country{Japan}
}
\email{tetsuyasakai@acm.org}

\author{Shin'ichi Satoh}
\affiliation{
  \institution{CyberAgent, Inc.}
  \city{Tokyo} 
  \country{Japan}
}
\email{satoh@nii.ac.jp}

\renewcommand{\shortauthors}{
}


\begin{abstract} 
Learning from implicit feedback is challenging because of the difficult nature of the one-class problem: we can observe only positive examples.
Most conventional methods use a pairwise ranking approach and negative samplers to cope with the one-class problem.
However, such methods have two main drawbacks particularly in large-scale applications;
(1) the pairwise approach is severely inefficient due to the quadratic computational cost; and
(2) even recent model-based samplers (e.g. IRGAN) cannot achieve practical efficiency due to the training of an extra model.

In this paper, we propose a learning-to-rank approach, which achieves convergence speed comparable to the pointwise counterpart while performing similarly to the pairwise counterpart in terms of ranking effectiveness.
Our approach estimates the probability densities of positive items for each user within a rich class of distributions,
viz. \emph{exponential family}.
In our formulation,
we derive a loss function and the appropriate negative sampling distribution based on maximum likelihood estimation.
We also develop a practical technique for risk approximation and a regularisation scheme.
We then discuss that our single-model approach is equivalent to an IRGAN variant under a certain condition.
Through experiments on real-world datasets,
our approach outperforms the pointwise and pairwise counterparts in terms of effectiveness and efficiency.
\end{abstract}

\keywords{
personalised recommendation;
collaborative filtering;
implicit feedback;
learning to rank
}

\begin{CCSXML}
<ccs2012>
   <concept>
       <concept_id>10002951.10003317.10003347.10003350</concept_id>
       <concept_desc>Information systems~Recommender systems</concept_desc>
       <concept_significance>500</concept_significance>
       </concept>
   <concept>
       <concept_id>10002951.10003317.10003338.10003343</concept_id>
       <concept_desc>Information systems~Learning to rank</concept_desc>
       <concept_significance>500</concept_significance>
       </concept>
   <concept>
       <concept_id>10010147.10010257.10010282.10010292</concept_id>
       <concept_desc>Computing methodologies~Learning from implicit feedback</concept_desc>
       <concept_significance>500</concept_significance>
       </concept>
 </ccs2012>
\end{CCSXML}

\ccsdesc[500]{Information systems~Recommender systems}
\ccsdesc[500]{Information systems~Learning to rank}
\ccsdesc[500]{Computing methodologies~Learning from implicit feedback}

\maketitle

\section{Introduction}
Recommender systems play a central role in alleviating information overload problems by helping users to discover content relevant to their interests.
To find relevant content from databases that are updated daily, implicit user feedback (e.g. clicks) is an important clue to users' recent interests because it is automatically collected and thus abundant.
However, the increase in the number of recommendation candidates leads to the sparsity of user feedback, overwhelming the accumulation of observed feedback.
Moreover, the use of implicit feedback incites the \emph{one-class problem}~\cite{pan2008one} in model training,
i.e., we can observe only positive examples.

For learning personalised ranking from positive-only data, a prevalent approach is to solve pairwise ranking problems~\cite{rendle2009bpr}.
The pairwise approach has the advantage in ranking effectiveness, but training is severely inefficient due to the quadratic computational cost of the expected loss (i.e. the risk) for each user.
By contrast, the pointwise approach can be trained efficiently in large-scale settings, but it returns less effective rankings.
Taking the advantage of the pointwise approach, non-sampling
methods~\cite{chen2019efficient,chen2020efficient} based on mini-batch stochastic gradient descent (SGD) have
demonstrated excellent convergence time with competitive effectiveness.
In addition, the SGD-based approaches enable the use of non-uniform weights for negative instances, which is often infeasible when using alternating least squares (ALS)~\cite{pan2008one,pan2009mind}.

Missing pairs of users and items could complement the positive-only data because these may provide users' negative signals.
When dealing with large-scale data, it is challenging to find informative negatives from a large number of candidates.
Therefore, a negative sampling/weighting strategy is a critical factor for efficient model training~\cite{rendle2009bpr,zhang2013optimizing,rendle2014improving}.
For alleviating the training inefficiency of the pairwise approach, 
previous studies explored strategies based on item popularity~\cite{pan2008one}, predicted scores of the ranker model~\cite{zhang2013optimizing,rendle2014improving}, an adversarial sampler model~\cite{IRGAN}, and reinforcement learning~\cite{ding2019rein}. 
Since a sampling strategy aims to find items with high preference scores from unobserved samples,
the strategy will behave like a recommender model.
Hence, advanced strategies often require an extra sampler model which learns to rank negative candidates according to their informativeness.
The IRGAN framework~\cite{IRGAN}, which is based on generative adversarial networks (GANs)~\cite{GAN}, provides an optimised negative sampler (i.e. the generator) for the ranker (i.e. the discriminator).
However, such a model-based sampler faces two challenges that result from the training of the two models.
First, the training process introduces hyper-parameters that are difficult to tune, such as the number of iterations for training the discriminator and generator in each epoch.
Second, it leads to inefficient training because multiple models need to be trained.

Herein, we propose a learning-to-rank approach for personalised ranking from implicit feedback based on parametric density estimation.
We derive a loss function as a likelihood of generative distributions of user feedback.
In our formulation, we can obtain the negative sampling distribution for a user as the estimated generative distribution of the user's feedback.
To enable the use of the various models such as graph neural networks (GNNs), 
we also propose risk approximation and regularisation techniques that encourage stable and efficient model training using SGD-based empirical risk minimisation.
Furthermore, we show the equivalence between our approach and an IRGAN variant based on Wasserstein GAN (WGAN)~\cite{arjovsky2017wasserstein} under a certain condition.
Based on this, we propose a variant of our approach, which estimates the distribution of positive items over \emph{unobserved items} through the minimisation of the Wasserstein distance.
We then show that our approach behaves theoretically similarly to the pairwise ranking approach with a model-based negative sampler while enabling drastically faster convergence in empirical experiments.

\section{Background and Related Work}
\subsection{Problem Formulation}
For a set of users $\mathcal{U}$ and a set of items $\mathcal{I}$,
we observe $N$ i.i.d. user feedback logs $(u_1, i_1, y_1),\dots,(u_N, i_N, y_N)$, where $u \in \mathcal{U}$, $i \in \mathcal{I}$, and $y \in \{-1, +1\}$ are user, item, and class labels, respectively.
However, if only implicit feedback is available, we can observe only positive samples, in other words, those labelled as $y=+1$.
Given a user, a ranker sorts items in the order in which they are most likely to interact with the user.

\subsection{Paradigms of Personalised Ranking}
In top-$K$ personalised ranking, most pointwise methods solve a binary classification problem to predict if a user prefers an item ~\cite{pan2008one,hu2008collaborative,koren2009matrix}; we will refer to this as the \emph{class-posterior probability estimation (CPE)} approach.
The CPE approach, which estimates class-posterior $p(y=+1|u,i)$, often impairs ranking performance,
whereas it is useful for the tasks such as CTR prediction.
On the other hand, owing to pointwise loss functions, the CPE approach has an advantage in the training efficiency compared with the pairwise counterpart.
Methods based on the CPE approach often introduce weighting strategies for selecting important unobserved samples.
Early works explored non-uniform weighting schemes such as item popularity~\cite{pan2008one,pan2009mind}, but non-uniform weights lead to severe computational costs.
For the sake of training efficiency, static and uniform weights have been adopted widely~\cite{hu2008collaborative}. 
Recent studies have proposed efficient methods that use non-uniform weights based on ALS~\cite{he2019fast} and SGD~\cite{chen2019efficient,chen2020efficient,chen2020jointly}.
Several studies have explored weighting strategies for the CPE approach based on item popularity, user exposure models~\cite{liang2016modeling}, and social information~\cite{chen2019samwalker}.
As the inefficiency issue of adaptive weighting~\cite{liang2016modeling} is problematic in a large-scale setting,
Chen et al.~\cite{chen2020fast} have proposed an efficient optimisation method with adaptive weighting strategy for matrix factorisation (MF).

In contrast to the CPE approach that estimates $p(y=+1|u,i)=p(i|u,y=+1)p(y=+1|u)/p(i|u)$,
the recently-proposed \emph{density ratio estimation (DRE)} approach~\cite{togashi2021density} directly estimates the density ratio $p(i|u,y=+1)/p(i|u)$
through the minimisation of the Bregman divergences~\cite{sugiyama2012density}, which can be empirically approximated with positive-unlabelled data. 
As the main distinction of the DRE approach from its CPE counterpart,
the ranker model in the DRE approach does not need to estimate class prior $p(y=+1|u)$ (i.e. user's activity),
which is a constant value and does not affect the predicted ranking for $u$;
thus, it avoids the estimation error of $p(y=+1|u)$ in the ranker model.
Several studies~\cite{menon2016linking,togashi2021density} have shown that the advantage of the DRE-based loss functions for top-$K$ ranking problems by weighing the items in the top of ranked lists in contrast to the CPE approach.
Togashi et al.~\cite{togashi2021density} empirically demonstrated the advantage of the DRE-based approach in terms of efficiency and effectiveness for top-$K$ personalised recommendation.
Although their framework can be utilised with an arbitrary weighting strategy, their proposed adaptive and ranking-aware weighting strategies are heuristically designed.

By directly modelling the generative process of implicit feedback from each user, estimating $p(i|u,y=+1)$ is a natural direction;
we call this approach the \emph{density estimation (DE)} approach.
The intuition behind estimating $p(i|u,y=+1)$ is that,
as $\int p(i|u,y=+1) di=1$ holds for each user, the items must compete for the limited budget of the total probability mass.
This one-sum property is suitable for top-$K$ ranking tasks because it induces relativity in the model predictions~\cite{yang2011collaborative,liang2018variational}.
The multinomial logit model (a.k.a. softmax model)~\cite{mcfadden1973conditional,manski1975maximum,blanc2018adaptive} is widely adopted as the model of $p(i|u,y=+1)$:
\begin{align}
   q(i|u) = \frac{\exp(f_u(i))}{\sum_{i \in \mathcal{I}}\exp(f_u(i))}
\end{align}
where $f_u(i) \in \mathbb{R}$ is $u$'s preference score for $i$.
Various methods have been proposed based on this parametric density function,
such as collaborative competitive filtering (CCF)~\cite{yang2011collaborative} and VAE-based collaborative filtering~\cite{liang2018variational,liu2019deep,shenbin2020recvae}.
For quantifying the distance between the true and estimated distributions, Kullback-Leibler divergence (KLD)~\cite{kullback1951information} is often utilised.
Tanielian et al.~\cite{tanielian2019relaxed} recently proposed a DE approach based on the maximum likelihood estimation (MLE) of softmax density functions.
However, their MLE formulation leads to intractable log-partition $\log \sum_{i \in \mathcal{I}}\exp(f_u(i))$ in the log-likelihood function, which makes SGD-based optimisation difficult for large-scale settings.
They also discussed that their approach can be regarded as the minimisation of KLD. 
On the other hand, Tolstikhin et al.~\cite{tolstikhin2017wasserstein} have shown that, when using a VAE-based method, minimising the Wasserstein distance (WD)~\cite{villani2008optimal} is equivalent to optimising the evidence lower bound (ELBO).
This fact allows us to use the WD rather than the KLD, which is inappropriate for the high-dimensional and sparse data that we have when dealing with user implicit feedback.
Based on this fact, recent studies have examined approaches that estimate the probability densities by minimising the WD~\cite{liu2019deep,yu2019wassrank,zhang2020wasserstein,meng2020wasserstein}.
However, VAE-based methods restrict the use of flexible models, such as MF~\cite{pan2008one,hu2008collaborative,koren2009matrix,rendle2009bpr} and GNNs~\cite{NGCF,he2020lightgcn}.
Furthermore, the VAE-based models often include a hidden layer that is as large as the number of items, making these models computationally costly for a large-scale setting.

The pairwise approach introduces another binary classification problem of predicting which of any given two items a user will prefer~\cite{burges2005learning,rendle2009bpr,agarwal2011infinite,boyd2012accuracy,christakopoulou2015collaborative}.
As the pairwise approach avoids predicting the absolute relevance of a user-item pair,
it is often more appropriate than the CPE approach for top-$K$ recommendation tasks.
While this approach is advantageous in terms of effectiveness,
it has a serious disadvantage in the convergence time.
Empirical risk minimisation based on U-statistics (e.g. Kendall's $\tau$) is practically infeasible in large-scale settings due to its quadratic computational cost with respect to $|\mathcal{I}|$.
Some techniques to alleviate this problem are available~\cite{papa2015sgd,clemenccon2016scaling},
but such techniques are not well examined in IR and recommendation literature.
Previous studies have proposed the top-emphasised pairwise approaches based on order statistics~\cite{weston2012latent,weston2013learning}, push-based losses~\cite{christakopoulou2015collaborative} and importance sampling~\cite{lian2020personalized}.

Our approach returns to the simple MLE of $p(i|u,y=+1)$ within an exponential family~\cite{canu2006kernel,sriperumbudur2017density,arbel2018kernel,dai2019exponential,dai2019kernel},
which includes the softmax model adopted in the conventional DE approaches.
Using the penalised MLE of parametric density functions,
we derive a loss function and an appropriate sampling/weighting strategy from a perspective of empirical risk minimisation.
Our methodology is applicable to most common model architectures like MF and GNNs.
Moreover, it offers two advantages in terms of training efficiency when compared with the pairwise approach;
while working in a pointwise fashion, it provides the optimal negative sampler without an extra model.

\subsection{Negative Sampling and Generative Adversarial Networks\label{section:related-gan}}
Because the choice of negative sampling strategy has a great impact on both efficiency and effectiveness,
various approaches have been explored based on item popularity, ranker's predicted scores~\cite{zhang2013optimizing,rendle2014improving}, additional view data~\cite{ding2019sampler}, optimised sampler models~\cite{IRGAN}, and reinforcement learning~\cite{ding2019rein}.

Early works~\cite{zhang2013optimizing,rendle2014improving} have developed ranker-based sampling strategies, which select the negatives scored higher by the ranker for training.
Ding et al.~\cite{ding2020simplify} have recently explored a simple and robust negative sampling strategy, which provides an efficient and effective implementation of negative sampling and simplifies model-based samplers for practical applications.
In contrast to a ranker-based sampler, Wang et al.~\cite{IRGAN} proposed a learning-to-rank framework based on GANs and
formulated ranking tasks as a min-max game between two rankers, a discriminator and a generator.
For personalised recommendation tasks, the generator model is mainly regarded as the negative sampler for the ranker.
Hence, GAN-based methods have often been explored in the context of the negative sampling strategies~\cite{zhang2013optimizing,fan2019deep,wang2019adversarial,NMRN,liu2020cofigan,guo2020ipgan}.
On the other hand, since GANs originally aim to estimate the generative distribution of observed data,
when using the generator as the ranker model,
GAN-based methods are more like the DE approach; the softmax model is widely adopted for the generator.
The disadvantage of the GAN-based approach is the computational cost of the following steps in the model training: 
(1) Training the two models for the ranker and sampler.
(2) Negative sampling from the generator's distribution over all items.
These steps also pose a challenge for the practical implementation of large-scale recommender systems. 
Moreover, several prior studies~\cite{ding2019rein,wang2020reinforced} have reported that GAN-based negative samplers can underperform ranker-based samplers such as dynamic negative sampling (DNS)~\cite{zhang2013optimizing}, which adaptively selects the negative item scored highest by the current ranker.

In this study, we aim to develop a risk function incorporating the concept of GAN-based samplers.
Remarkably, under a certain assumption on the ranker (discriminator), our MLE-based approach is equivalent to a variant of IRGAN with a regularised generator while omitting the generator model in the formulation of GANs.
Whereas our motivation to simplify the recent negative samplers is similar to that of Ding et al.~\cite{ding2020simplify}, we shall propose a risk function rather than the algorithm for negative sampling.

\section{Proposed Method\label{section:mle-de}}
\subsection{Parametric Density Functions\label{kefe}}
We formulate the personalised ranking as the estimation of the densities of items conditioned on a user and the positive class, namely, $p(\cdot|u,y=+1)$ on $\mathcal{I}$.
To this end, we first specify the model of $p(\cdot|u,y=+1)$ with an exponential family parametrised by $f_u$.
\begin{align}
  \label{eq:kef}
  p_f(i|u) \coloneqq p_0(i)\exp(f_u(i) - A(f_u)),
\end{align}
where $f_u$ denotes the sufficient statistics, and $p_0$ is a base density function on $\mathcal{I}$. 
Here, $A(f_u) \coloneqq \log \int p_0(i)\exp(f_u(i))di$ is the normalisation term that ensures $\int_i p_f(i|u) di = 1$ (a.k.a. log-partition function).
In the context of personalised ranking, $f_u(i)$ is the scoring function to learn, which represents $u$'s preference score for $i$.
In typical models, such as MF and GNNs,
each score is computed as the similarity between $u$ and $i$ in the latent feature space.
\begin{align}
  f_u(i) \coloneqq {\bf e}_u^{\top}{\bf e}_i,
\end{align}
where ${\bf e}_u \in \mathbb{R}^d$ and ${\bf e}_i \in \mathbb{R}^d$ are the $d$-dimensional latent embeddings of $u$ and $i$, respectively.
Considering practical applications, we assume that $p_0(i)$ is a uniform probability for all $i$;
as a result, $p_f(i|u) \propto \exp(f_u(i))$ holds, and therefore, the order of $p_f(i|u)$ is determined only by the order of $f_u(i)$.
This allows us to utilise approximate nearest neighbour (ANN) techniques~\cite{liu2005investigation} to
efficiently perform the top-$K$ maximum inner product search (MIPS) for a user.
This is essential for real applications in which the number of items is large~\cite{rendle2020neural}.   

\subsection{Penalised Maximum Likelihood Estimation\label{penalised-mle}}
To learn $f_u(i)$ for each $u$ from the observed implicit feedback,
we formulate the MLE of density functions of positive items for users.
Because the na\"{i}ve MLE is ill-posed and can impair generalisation performance,
we introduce a penalisation term based on the KLD between the estimated distribution and a uniform distribution.
We can express the empirical risk based on the negative log-likelihood function with KLD penalisation as follows:
\begin{align}
  R(f)
  &\coloneqq \mathbb{E}_{u}\left[\widehat{\mathbb{E}}_{+} \left[-\log p_f(i|u)\right] + KLD(p_f(\cdot|u)||p_0(\cdot)) \right] \\ \label{eq:plugin-pf-def}
  &= \mathbb{E}_{u}\left[\widehat{\mathbb{E}}_{+} \left[-\log p_0(i) - f_u(i) + A(f_u)\right] + KLD(p_f(\cdot|u)||p_0(\cdot)) \right] \\\label{eq:naive-likelihood}
  &= \mathbb{E}_{u}\left[\widehat{\mathbb{E}}_{+} \left[- f_u(i)\right] + A(f_u) + KLD(p_f(\cdot|u)||p_0(\cdot)) \right] + C,
\end{align}
where $\widehat{\mathbb{E}}_{+}=\widehat{\mathbb{E}}_{p(i|u,y=+1)}$ is the empirical expectation over the training positive items for a given user $u$,
and $C$ is a constant value with respect to the training of $f_u$.
In Eq.~(\ref{eq:plugin-pf-def}), we used the definition of $p_f(i|u)$ in Eq.~(\ref{eq:kef}).
The problem here is that $A(f_u)=\log \int p_0(i)\exp(f_u(i))di$ is intractable, and thus, $R(f)$ is difficult to be approximated through mini-batch SGD.
To solve this, we transform the KLD between $p_f(\cdot|u)$ and $p_0(\cdot)$ as follows:
\begin{align}
  KLD(p_f(\cdot|u)||p_0(\cdot)) &\coloneqq \int p_f(i|u)\log(p_f(i|u)/p_0(i)) di \\
  &= \int p_f(i|u)\log \exp(f_u(i)-A(f_u)) di \\
  &= \int p_f(i|u)f_u(i) di - A(f_u) \\
  &= \mathbb{E}_{p_f(i|u)}\left[f_u(i)\right] - A(f_u).
\end{align}
Plugging this into Eq.~(\ref{eq:naive-likelihood}), we can reformulate the risk as follows:
\begin{align}
  R(f) &= \mathbb{E}_{u}\Bigg[\widehat{\mathbb{E}}_{+} \left[ -f_u(i)\right] + A(f_u)
  + \mathbb{E}_{p_f(i|u)}\left[f_u(i)\right] - A(f_u) \Bigg] \\ \label{eq:nll-risk}
  &= \mathbb{E}_{u}\left[\widehat{\mathbb{E}}_{+} \left[ -f_u(i)\right] + \mathbb{E}_{p_f(i|u)}\left[f_u(i)\right]  \right].
\end{align}
Remarkably, the appropriate distribution for negative sampling under ranker $f_u(\cdot)$ is exactly $p_f(i|u)$ in our formulation.
Note that we use all samples including (even observed) positive ones to compute the second term of the RHS in Eq.~(\ref{eq:nll-risk}), although it corresponds to the risk for negative instances in the conventional pointwise or pairwise approaches;
we shall discuss this point in Section~\ref{section:disjoint-support}.

\subsection{Risk Approximation\label{section:risk-approx}}
For sampling negatives from ranker-dependent $p_f(i|u)$, we need to compute each user's preference scores for all items;
this is infeasible in a large-scale setting.
Considering this, our approach follows the two-phased sampling strategy~\cite{zhang2013optimizing,rendle2014improving}.
We first sample a set of item candidates from a static sampling distribution and then evaluate the preference scores for them on the fly.
In addition, we shall tackle another challenge in negative sampling with ranker-dependent distributions.
That is, because $p_f(i|u)$ changes with the update to $f$ in each training step,
costly Monte-Carlo sampling is often required at every training step to accurately approximate the gradient of $\mathbb{E}_{p_f(i|u)}[f_u(i)]$.
To achieve better risk approximation without sampling from $p_f(i|u)$,
we transform $R(f)$ by leveraging explicit parametric model $p_f(i|u)$ defined in Eq.~(\ref{eq:kef}).
By introducing importance sampling, we can obtain the following risk:
\begin{align}
  R(f)  &= \mathbb{E}_{u}\left[\widehat{\mathbb{E}}_{+} \left[ -f_u(i)\right] + \mathbb{E}_{p_0}\left[\frac{p_f(i|u)}{p_0(i)}f_u(i)\right]  \right] \\
  &= \mathbb{E}_{u}\left[\widehat{\mathbb{E}}_{+} \left[ -f_u(i)\right] + \frac{\mathbb{E}_{p_0}\left[\exp(f_u(i))f_u(i)\right]}{\exp{A(f_u)}}\right]  \\
  &= \mathbb{E}_{u}\left[\widehat{\mathbb{E}}_{+} \left[ -f_u(i)\right] + \frac{\mathbb{E}_{p_0}\left[\exp(f_u(i))f_u(i)\right]}{\mathbb{E}_{p_0}\left[\exp(f_u(i))\right]}\right]. 
\end{align}
In the third equation, we use $\exp A(f_u)=\int p_0(i)\exp(f_u(i))di=\mathbb{E}_{p_0}[\exp(f_u(i))]$.
Finally, we reach the empirically approximated risk with a given mini-batch.
\begin{align}
  \notag
  &\widehat{R}(f, \mathcal{U}_B, \mathcal{I}_B) \\ \label{eq:risk-estimator}
  &\coloneqq \frac{1}{|\mathcal{U}_{B}|}\sum_{u \in \mathcal{U}_B}\left(-\frac{1}{|\mathcal{I}_{u}^{+}|}\sum_{i \in \mathcal{I}_{u}^{+}} f_u(i) + \frac{\sum_{i \in \mathcal{I}_B}\exp(f_u(i))f_u(i)}{\sum_{i \in \mathcal{I}_B}\exp(f_u(i))} \right),
\end{align}
where $\mathcal{U}_B$ and $\mathcal{I}_B$ are the sets of sampled users and items, respectively.
Here, $\mathcal{I}_{u}^{+}$ is the set of observed positive items for $u$.
Note that $\widehat{R}(f, \mathcal{U}_B, \mathcal{I}_B)$ is biased for general sampling distributions of $\mathcal{I}_B$.
However, we found in our preliminary experiments that the corrected risk estimator with inverse sampling probabilities did not perform well due to a large variance of the estimator, and we therefore omit it here for the sake of simplicity.

For mini-batch sampling, we adopted the same user-based sampling strategy as is used in conventional approaches~\cite{liang2018variational,togashi2021density}.
We uniformly sample $|\mathcal{U}_B|$ users and then sample all positive items $\mathcal{I}_{u}^{+}$ for each user.
For computing $\widehat{R}(f, \mathcal{U}_B, \mathcal{I}_B)$, we build a set of items $\mathcal{I}_B$ in a mini-batch as unique sampled items. 
We leave the use of advanced samplers for $\mathcal{I}_B$ as future work.

In practice, we minimise $\widehat{R}(f, \mathcal{U}_B, \mathcal{I}_B)$ with the $L_2$ regularisation for the latent vectors as follows: 
\begin{align}
  \label{eq:pde-obj}
  \min_f \widehat{R}(f, \mathcal{U}_B, \mathcal{I}_B) + \lambda \mathcal{R}(f, \mathcal{U}_B, \mathcal{I}_B),
\end{align}
where $\mathcal{R}(f, \mathcal{U}_B, \mathcal{I}_B)$ is the $L_2$ regularisation term, and $\lambda$ is the hyper-parameter that controls the intensity of the regularisation.

\subsection{Norm Clipping Technique\label{section:norm-clip}}
Intuitively, owing to the one-sum property of $p_f(i|u)$,
the spike of the probability mass for a certain item dominates the total probability mass.
This effect leads to overfitting or a large variance in $\widehat{R}(f, \mathcal{U}_B, \mathcal{I}_B)$, and thus destabilises the model training.
This observation suggests the following smoothness condition for $p_f(i|u)$: 
\begin{align}
  \label{eq:smoothness-condition}
  \max_{i,i' \in \mathcal{I}} |p_f(i|u)-p_f(i'|u)| \leq L, 
\end{align}
where $L \in (0, \infty)$ is the upper bound of the gap between the probability mass assigned to two items.
Owing to the monotonicity of the exponential function, we can rewrite this condition as follows:
\begin{align}
  \label{eq:lipschitz-condition-critic}
  \max_{i,i' \in \mathcal{I}} |p_f(i|u)-p_f(i'|u)| \leq L &\Leftrightarrow \max_{i,i' \in \mathcal{I}} |f_u(i)-f_u(i)| \leq \tilde{L} \\  
  &\Leftrightarrow \max_{i,i' \in \mathcal{I}} |{\bf e}_{u}^{\top}({\bf e}_{i}-{\bf e}_{i'})| \leq \tilde{L} \\
  &\Leftarrow \max_{v \in \{u\} \cup \mathcal{I}} \norm{{\bf e}_{v}} \leq \bar{n},
\end{align}
where $\tilde{L}$ is the upper bound of the gap between two preference scores, and 
$\bar{n} \in (0, \infty)$ is the upper bound of the $L_2$ norms of the latent vectors.
To satisfy the smoothness condition,
we ensure that the latent vectors for all users and items have upper-bounded $L_2$ norms.
It is worth noting that the $L_2$ regularisation is not appropriate for ensuring the condition
because it penalises the average of the $L_2$ norms of latent vectors rather than their maxima.
Therefore, we propose an $L_2$ norm clipping technique.
For methods based on MF, we define the following clipping operator:
\begin{align}
  \text{clip}({\bf e}, \bar{n}) \coloneqq \begin{cases}
    {\bf e}, \,\,\,\,\,\, &\text{if } \norm{{\bf e}} \leq \bar{n} \\
    \frac{\bar{n}}{\norm{{\bf e}}}{\bf e}, \,\,\,\,\,\, &\text{if } \norm{{\bf e}} > \bar{n}
  \end{cases},
\end{align}
where ${\bf e} \in \mathbb{R}^d$ denotes a latent vector.
We apply the clipping operator to each latent vector after updating it in each training step.

This technique is also applicable to the recently-proposed GNN-based method, LightGCN~\cite{he2020lightgcn}, which considers high-order smoothness on the user-item graph.
In LightGCN, each final latent vector for prediction is computed as the linear combination of the trainable latent vectors in the first layer:
\begin{align}
  {\bf e}_u = \sum_{v \in \mathcal{G}} \beta_{v,u}{\bf e}_v^{(1)},\,\,\, {\bf e}_i = \sum_{v \in \mathcal{G}} \beta_{v,i}{\bf e}_v^{(1)}, 
\end{align}
where $\mathcal{G}=\mathcal{U}\cup\mathcal{I}$ is the set of nodes in the user-item graph,
and ${\bf e}_v^{(1)}$ denotes the latent vector for $v \in \mathcal{G}$ in the first layer.
Here, $\beta_{v,u}$ and $\beta_{v,i}$ are non-negative coefficients in the linear combination.
Therefore, when clipping the latent vectors in the first layer, the norms of the final vectors are also upper-bounded.

\section{Theoretical Discussion}
In this section, we first discuss the relationship between our approach and IRGAN;
based on the discussion, we propose a variant of our DE approach based on the WD. 
We then discuss its connection to the pairwise approach.

\subsection{Relationship to IRGAN\label{section:rel-wgan}}
Previous studies pointed out the connection between GANs and density estimation~\cite{arbel2018kernel,dai2019exponential,dai2019kernel}.
Following these studies, we discuss the similarity between our single-model approach and an IRGAN variant based on the WGAN~\cite{arjovsky2017wasserstein}.

We first assume a method based on WGAN, namely, \emph{IRWGAN}.
We can express the dual form of the Wasserstein-1 distance between the true and estimated distributions $P_u$ and $Q_u$ for a user as follows:
\begin{align}
  \label{eq:wass-dist-def}
  WD(P_u||Q_u) \coloneqq \sup_{\norm{f_u}_{L\leq 1}} \mathbb{E}_{i \sim P_u}\left[f_u(i)\right] - \mathbb{E}_{i \sim Q_u}\left[f_u(i)\right],
\end{align}
where $\norm{f_u}_{L\leq 1}$ requires the 1-Lipschitz continuity of $f_u$, namely, $\max_{i,i'\in \mathcal{I}}|f_u(i)-f_u(i')| \leq 1$; we here assume that $\mathcal{I}$ is a discrete metric space.

By formulating density estimation as the WD minimisation under KLD regularisation for the generator,
we can obtain the min-max objective of IRWGAN for user $u$ as follows:
\begin{align}
\notag
  &\min_{q_u \in \mathcal{P}} KLD(q_u||p_0) + WD(P_u||Q_u) \\\label{eq:wass-obj}
  = &\min_{q_u \in \mathcal{P}} \left\{KLD(q_u||p_0) + \sup_{f_u \in \mathcal{F}_1}\left\{\widehat{\mathbb{E}}_{+}\left[f_u(i)\right] - \mathbb{E}_{q_u(i)}\left[f_u(i)\right]\right\}\right\},
\end{align}
where $q_u \in \mathcal{P}$ and $f_u \in \mathcal{F}_1$ are the generator and discriminator for $u$, respectively.
We assume that $\mathcal{P}$ is the set of probability measures supported on $\mathcal{I}$,
and $\mathcal{F}_1$ is a convex subset of 1-Lipschitz functions.

We here show that IRWGAN with convex $\mathcal{F}_1$ leads to the same ranker as the proposed approach in Section~\ref{section:mle-de}.
To this end, by following Theorem 1 of Farnia et al.~\cite{farnia2018convex}, we first consider the duality of the IRWGAN objective.
\begin{theorem}[Strong Duality of IRWGAN]
\label{theorem:duality-irwgan}
Suppose a set $\mathcal{F}_1$ is convex. Then, the strong duality of the IRWGAN holds, namely,
\begin{align}
\notag
   &\text{RHS of Eq.~(\ref{eq:wass-obj})} \\\label{eq:max-min-irwgan}
   = &\sup_{f_u \in \mathcal{F}_1} \left\{\widehat{\mathbb{E}}_{+}\left[f_u(i)\right] + \min_{q_u \in \mathcal{P}}\left\{KLD(q_u||p_0) - \mathbb{E}_{q_u(i)}\left[f_u(i)\right]\right\}\right\}.
\end{align}
\end{theorem}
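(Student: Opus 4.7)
The plan is to recognise Eq.~(\ref{eq:max-min-irwgan}) as a min--sup versus sup--min swap for a single bivariate saddle function, and then to apply Sion's minimax theorem in the spirit of Farnia et al.~\cite{farnia2018convex}. Define the Lagrangian
\begin{align*}
  L(q_u, f_u) := KLD(q_u||p_0) + \widehat{\mathbb{E}}_{+}[f_u(i)] - \mathbb{E}_{q_u(i)}[f_u(i)].
\end{align*}
Because $\widehat{\mathbb{E}}_{+}[f_u(i)]$ is constant in $q_u$, the RHS of Eq.~(\ref{eq:max-min-irwgan}) equals $\sup_{f_u \in \mathcal{F}_1}\min_{q_u \in \mathcal{P}} L(q_u, f_u)$, while the LHS (i.e.\ the RHS of Eq.~(\ref{eq:wass-obj})) equals $\min_{q_u \in \mathcal{P}}\sup_{f_u \in \mathcal{F}_1} L(q_u, f_u)$. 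The theorem is therefore exactly the assertion that these two nested optimisations coincide.

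First I would verify the saddle structure of $L$. For each fixed $f_u$, the map $q_u \mapsto KLD(q_u||p_0)$ is convex by the standard log--sum/Jensen argument and $q_u \mapsto -\mathbb{E}_{q_u}[f_u]$ is linear, so $L(\cdot, f_u)$ is convex on the convex set $\mathcal{P}$. For each fixed $q_u$, $L(q_u, \cdot)$ is affine in $f_u$, hence both concave and upper semicontinuous on the convex set $\mathcal{F}_1$; convexity of $\mathcal{F}_1$ as assumed in the theorem is precisely what is needed here, mirroring the hypothesis of Farnia et al.\ that distinguishes this from ordinary unrestricted 1-Lipschitz WGAN duality.

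Second I would discharge the compactness and semicontinuity side. Since the excerpt treats $\mathcal{I}$ as a discrete metric space, $\mathcal{P}$ is a closed probability simplex and hence convex and compact; the map $q_u \mapsto KLD(q_u||p_0)$ is lower semicontinuous on it and $q_u \mapsto \mathbb{E}_{q_u}[f_u]$ is continuous for every bounded $f_u$, so $L(\cdot, f_u)$ is lsc on the compact $\mathcal{P}$. Sion's minimax theorem then delivers
\begin{align*}
  \min_{q_u \in \mathcal{P}} \sup_{f_u \in \mathcal{F}_1} L(q_u, f_u) = \sup_{f_u \in \mathcal{F}_1} \min_{q_u \in \mathcal{P}} L(q_u, f_u),
\end{align*}
and restoring $\widehat{\mathbb{E}}_{+}[f_u(i)]$ outside the inner minimum on the RHS recovers Eq.~(\ref{eq:max-min-irwgan}) as written.

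The main obstacle will be the usual care required when invoking a minimax theorem over a function class rather than a subset of a Euclidean space: one must fix a topology on $\mathcal{F}_1$ under which $L(q_u, \cdot)$ is upper semicontinuous while the 1-Lipschitz constraint is preserved. If $\mathcal{I}$ is only assumed discrete rather than finite, it is natural to restrict $\mathcal{F}_1$ to uniformly bounded 1-Lipschitz functions on $\mathcal{I}$, which is fully compatible with the Lipschitz constraint already baked into $\mathcal{F}_1$. Attainment of the supremum on the $f_u$ side is not needed for the stated equality, so no further compactness of $\mathcal{F}_1$ beyond convexity has to be imposed.
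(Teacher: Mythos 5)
Your proposal is correct and follows essentially the same route as the paper's own proof: identify the convex--concave saddle structure (the objective is convex and lower semicontinuous in $q_u$ via the KLD term and affine, hence concave, in $f_u$), use convexity and compactness of $\mathcal{P}$ together with the assumed convexity of $\mathcal{F}_1$, and invoke a minimax theorem. The only difference is cosmetic --- you appeal to Sion's minimax theorem where the paper appeals to the von Neumann--Fan theorem of Borwein --- and the hypotheses you verify are the same ones the paper checks.
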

\begin{proof}
We show the applicability of von Neumann-Fan minimax theorem (Thereom 2 of Borwein's work~\cite{borwein2016very}), which guarantees the strong duality of min-max objectives under several conditions.
The space $\mathcal{P}$ of probability measures on compact $\mathcal{I}$ is convex and weakly compact~\cite{billingsley2013convergence}.
The set $\mathcal{F}_1$ of 1-Lipschitz continuous functions is convex as we assumed.
The objective in Eq.~(\ref{eq:wass-obj}) is linear (i.e. concave) in $f_u$ while
it is lower semi-continuous and convex in $q_u$ owing to the KLD regularisation for $q_u$.
Therefore, the strong duality of IRWGAN holds by applying the minimax theorem.
\end{proof}

By using Theorem~\ref{theorem:duality-irwgan}, we obtain the risk for $f$ with the optimal generators $q_u^*$ based on the max-min objective (i.e. the RHS of Eq.~(\ref{eq:max-min-irwgan})) as follows:
\begin{align}
  \label{eq:wgan-risk}
  R_{\text{wgan}}^{*}(f) &\coloneqq \mathbb{E}_{u}\left[\widehat{\mathbb{E}}_{+}\left[-f_u(i)\right] + \mathbb{E}_{q_u^*(i)}\left[f_u(i)\right]\right].
\end{align}
Here, based on Eq.~(\ref{eq:max-min-irwgan}), the optimal distribution $q_u^{*}(i)$ for $u$ is the one that maximises objective $L_u(q_u) \coloneqq \mathbb{E}_{q_u(i)}\left[f_u(i)\right] - KLD(q_u||p_0)$.
To obtain $q_u^{*}(i)$, we solve the following maximisation problem with respect to $q_u$:
\begin{align}
  \notag
  &\max_{q_u \in \mathcal{P}} \mathbb{E}_{q_u}\left[f_u(i)\right] - KLD(q_u||p_0) \\
  &= \max_{q_u \in \mathcal{P}} \int q_u(i)f_u(i) di - \int q_u(i)\left(\log q_u(i) - \log p_0\right) di \\
  &= \max_{q_u \in \mathcal{P}} \int q_u(i)\left(f_u(i) + \log p_0 - \log q_u(i)\right) di. 
\end{align}
By considering the differentiation with respect to density $q_u(i)$,
$\nabla_{q_u(i)} L_u(q_u) = 0$ holds for each $i$ at the optimal $q_u^{*}$ owing to the strong concavity of $L_u(\cdot)$.
Thus, we can obtain the optimal density $q_u^{*}(i)$ as the following closed form. 
\begin{align}
  \nabla_{q_u(i)} L_u(q_u^*) = 0 &\Leftrightarrow f_u(i) + \log p_0 - \log q_u^*(i) - 1 = 0 \\
  & \Leftrightarrow \log q_u^*(i) = f_u(i) + \log p_0 - 1 \\
  & \Leftrightarrow q_u^*(i) \propto p_0 \exp (f_u(i)).
\end{align}
Therefore, $q_u^{*}(i)$ is exactly equivalent to $p_f(i|u)$ defined in Eq.~(\ref{eq:kef}).
By substituting $q_u^{*}(i)=p_f(i|u)$ into Eq.~(\ref{eq:wgan-risk}),
we immediately obtain the same risk as $R(f)$ in Eq.~(\ref{eq:nll-risk}).

Based on this result, if we aim to obtain discriminator $f$ in IRWGAN with convex $\mathcal{F}_1$ as the ranker,
we can omit the generator model by obtaining the optimal sampling distributions based on $f_u$ as a closed form.
It allows us to keep the sampler up-to-date at every training step while reducing computational costs and hyper-parameters.
We can utilise the norm clipping technique to obtain the $\tilde{L}$-Lipschitz continuity of $f_u$ (the RHS of Eq.~(\ref{eq:lipschitz-condition-critic}).
In other words, with the norm clipping technique,
our approach may minimise $\tilde{L} \cdot WD(P_u||Q_u)$ with KLD regularisation.
It should be noted that the feasible set of $f_u$ (i.e. $\mathcal{F}_1$) is generally \emph{not} convex when using complex model architectures (probably including LightGCN),
and it may lead to a duality gap in yielding the max-min form in Theorem~\ref{theorem:duality-irwgan} to some extent.

\subsection{Estimating Densities on Unobserved Items\label{section:disjoint-support}}
In the ranking evaluation, we do not use the training items (i.e. observed positive items) to create a ranked list for a user.
Considering this, we propose a variant of our approach which estimates $p(\cdot|u,y=+1)$ on $\mathcal{I}\setminus\mathcal{I}_u^{+}$ rather than $p(\cdot|u,y=+1)$ on $\mathcal{I}$.
The problem here is that the empirical distribution of positive items has a support $\mathcal{I}_u^{+}$ which is disjoint with that of the distribution to be estimated; namely, $\mathcal{I}\setminus\mathcal{I}_u^{+}$.
However, based on the discussion in Section~\ref{section:rel-wgan}, our approach may minimise the WD, which can be defined even between two distributions with disjoint supports~\cite{arjovsky2017wasserstein}. 
Therefore, our approach can safely estimate the density function by minimising the WD between two disjoint distributions over observed and unobserved items, whereas many divergences (e.g. KLD) are inappropriate for such distributions.

Assuming that our approach minimises the WD,
we can obtain the risk for the density estimation on unobserved items by defining $p_f(\cdot|u)$ and $p_0(\cdot)$ on $\mathcal{I}\setminus\mathcal{I}_u^{+}$.
Thus, we can also obtain the risk estimator by simply replacing $\mathcal{I}_B$ with $\mathcal{I}_B\setminus\mathcal{I}_u^{+}$ in the RHS of Eq.~(\ref{eq:risk-estimator}) as follows:
\begin{align}
  \notag
  &\widehat{R}_{\text{WD}}(f, \mathcal{U}_B, \mathcal{I}_B) \\ \label{eq:risk-estimator-ds}
  &\coloneqq \frac{1}{|\mathcal{U}_{B}|}\sum_{u \in \mathcal{U}_B}\left(-\frac{1}{|\mathcal{I}_{u}^{+}|}\sum_{i \in \mathcal{I}_{u}^{+}} f_u(i) + \frac{\sum_{i \in \mathcal{I}_B\setminus\mathcal{I}_{u}^{+}}\exp(f_u(i))f_u(i)}{\sum_{i \in \mathcal{I}_B\setminus\mathcal{I}_{u}^{+}}\exp(f_u(i))} \right).
\end{align}

\subsection{Relationship to Pairwise Approach\label{section:rel-pairwise}}
We shall discuss how our proposed risk relates to that of pairwise approaches.
The user-conditional empirical risk of a pairwise approach with negative item sampler $Q_u$ can be defined as follows:
\begin{align}
 R_{\text{pair}}(f_u) &\coloneqq \widehat{\mathbb{E}}_{i \sim P_{u}}\mathbb{E}_{i' \sim Q_u}\left[s(f_u(i')-f_u(i))\right]
\end{align}
where $s(\cdot)=\log(1 + \exp(\cdot))$ is the softplus function;
here, $s(f_u(i')-f_u(i))$ is a widely adopted differentiable surrogate of the 0-1 loss, $\mathds{1}\{f_u(i') > f_u(i)\}$~\cite{rendle2009bpr,christakopoulou2015collaborative}.
Using the convexity of the softplus function, we can evaluate the following lower bound of the pairwise risk by applying Jensen's inequality.
\begin{align}
\label{eq:bpr-ub-proposed}
  R_{\text{pair}}(f_u) &\geq s\left(\widehat{\mathbb{E}}_{i \sim P_u}\left[-f_u(i)\right] + \mathbb{E}_{i \sim Q_u}\left[f_u(i)\right]\right).
\end{align}
Therefore, when using $p_f(i|u)$ on $\mathcal{I}\setminus\mathcal{I}_u^{+}$ for $Q_u$,
the RHS in Eq.~(\ref{eq:bpr-ub-proposed}), the lower bound of the pairwise risk corresponds to the proposed WD-based risk for a given user.
Because $p_f(i|u)$ maximises the RHS of Eq.~(\ref{eq:bpr-ub-proposed}) under the KLD regularisation as shown in Section~\ref{section:rel-wgan},
our approach minimises the upper bound of the lower bound of the pairwise risk.
Here, we let $\mu$ be the short-hand for $\widehat{\mathbb{E}}_{i \sim P_{u}}\mathbb{E}_{i' \sim Q_u}[f_u(i')-f_u(i)]$.
By assuming the $\tilde{L}$-Lipschitz continuity of $f_u$,
we can obtain the upper bound of the gap between the RHS and LHS of Eq.~(\ref{eq:bpr-ub-proposed}), i.e.,
the worst-case deviation between our approach and a pairwise counterpart, as follows:
\begin{align}
  |R_{\text{pair}}(f_u) - s(\mu)| &\leq \widehat{\mathbb{E}}_{i \sim P_{u}}\mathbb{E}_{i' \sim Q_u}[|s(f_u(i')-f_u(i)) - s(\mu)|] \\ \label{eq:jgap-lipschitz-softplus} 
  &\leq \widehat{\mathbb{E}}_{i \sim P_{u}}\mathbb{E}_{i' \sim Q_u}[|f_u(i') - f_u(i) - \mu|] \\ \label{eq:jgap-upper-bound} 
  &\leq \tilde{L}
\end{align}
In Eq.~(\ref{eq:jgap-lipschitz-softplus}), we used the 1-Lipschitz continuity of the softplus function; $|s(f_u(i')-f_u(i)) - s(\mu)| \leq |f_u(i')-f_u(i)-\mu|$.
In Eq.~(\ref{eq:jgap-upper-bound}), we use $-\tilde{L} \leq f_u(i') - f_u(i) \leq \tilde{L}$.
This result indicates that, by introducing norm clipping, we can bound the gap between our approach and pairwise counterpart with the same negative sampler $p_f(i|u)$.
As a summary, we can expect that our approach performs similarly to the pairwise counterpart with the top-weighted adaptive sampler.

\section{Experiments\label{expr}}
\subsection{Experimental Settings}
\subsubsection{Datasets}
To conduct a fair comparison,
we closely follow the settings of the experiments in LightGCN~\cite{he2020lightgcn} and DREGN-CF~\cite{togashi2021density}.
We utilise the same datasets and train/test splits that are available for these projects on GitHub\footnote{\url{https://github.com/kuandeng/LightGCN}\label{repo-lightgcn}};
(a) Gowalla~\cite{liang2016modeling}; (b) Yelp2018~\footnote{\url{https://www.yelp.com/dataset/challenge}}; and (c) Amazon-Book~\cite{he2016ups}.
The statistics are shown in Table~\ref{table:statistics}.

\subsubsection{Recommenders and Model Settings}
We list state-of-the-art methods based on neural networks as the baselines.
\begin{itemize}
\item\textbf{Neural Graph Collaborative Filtering (NGCF)}~\cite{NGCF} This is a GNN-based method that inherits the design of a graph convolutional network~\cite{GCN}. 
\item\textbf{Mult-VAE}~\cite{liang2018variational} This is a VAE-based collaborative filtering method. This method can be considered as a DE approach.
\item\textbf{Efficient Neural Matrix Factorisation (ENMF)}~\cite{chen2019efficient} This method is based on the non-sampling CPE approach based on Neural Collaborative Filtering (NCF)~\cite{NCF}.
\item\textbf{LightGCN}~\cite{he2020lightgcn} This is a state-of-the-art GNN-based method. It adopts the BPR loss~\cite{rendle2009bpr} with uniform negative sampling.
\item\textbf{DREGN-CF}~\cite{togashi2021density} This is a state-of-the-art LightGCN-based method with ranking DRE-based risk. It uses the same sampling strategy we discussed in Section~\ref{section:risk-approx} and a ranker-based weighting strategy.
\end{itemize}
We utilise the implementation of each method released by the authors in GitHub for these four methods: NGCF~\footnote{\url{https://github.com/xiangwang1223/neural_graph_collaborative_filtering.git}}, Mult-VAE~\footnote{\url{https://github.com/dawenl/vae_cf}}, ENMF~\footnote{\url{https://github.com/chenchongthu/ENMF}}, and LightGCN~\footnote{\url{https://github.com/kuandeng/LightGCN.git}}\footnote{\url{https://github.com/gusye1234/LightGCN-PyTorch}}.
For DREGN-CF, we implemented the ranking uLSIF risk without importance sampling and with non-negative risk correction, which empirically showed better ranking performance in their experiments. 
For NGCF, ENMF, LightGCN, and DREGN-CF, we utilise the best parameters reported by the authors for each dataset, when available;
as we implemented an ablated variant of DREGN-CF, we set $|U_B|$ to $2500$ and tuned hyper-parameters including the learning rate, coefficient of $L_2$ regularisation, and upper bound of density ratios.
For Mult-VAE, we utilised the model architecture suggested in the paper.

Our proposed method is based on LightGCN but trained with our proposed risk and norm clipping technique; we call our method the \emph{parametric density estimation LightGCN (\textbf{PDE-LGCN})}.
We also examine a method discussed in Section~\ref{section:disjoint-support}, namely, \emph{Wasserstein distance LightGCN (\textbf{WD-LGCN})}. 
For the methods based on LightGCN (i.e. LightGCN, DREGN-CF, PDE-LGCN, and WD-LGCN), we used the same network architecture, and we set the number of light-weight graph convolution layers and the dimensionality of embeddings to 3 and 64, respectively.
For our PDE-LGCN and WD-LGCN, we have two hyper-parameters, namely, the weight for the $L_2$ regularisation $\lambda$ and the upper bound of $L_2$ norms $\bar{n}$.
We tuned $\lambda$ and $\bar{n}$ based on validation within the ranges of $\{0.01, 0.05, 0.1\}$ and $\{1.0, 2.0, \dots, 9.0\}$.
We set $|U_B|$ to $2500$ for Gowalla, Yelp2018, and Amazon-Book.
For all datasets, the learning rate was tuned in $\{0.01, 0.02, \cdots, 0.1\}$.

\begin{table}[t]
\caption{Statistics of the four datasets: Gowalla, Yelp2018, Amazon-Book.}
  \centering
\begin{tabular}{lrrrr}
  \hline
  \multicolumn{1}{l}{Dataset} & \multicolumn{1}{r}{User \#} & \multicolumn{1}{r}{Item \#} & \multicolumn{1}{r}{Interaction \#} & \multicolumn{1}{r}{Density} \\ \hline
  Gowalla & $29,858$ & $40,981$ & $1,027,370$ & $0.00084$ \\ 
  Yelp2018 & $31,668$ & $38,048$ & $1,561,406$ & $0.00130$ \\ 
  Amazon-Book & $52,643$ & $91,599$ & $2,984,108$ & $0.00062$ \\ \hline
    \end{tabular}
  \label{table:statistics}
\end{table}

\subsection{Comparison with Baselines\label{section:expr-effectiveness}}
We first evaluate our proposed method, PDE-LGCN, and baselines in terms of overall ranking effectiveness. Following previous works~\cite{NGCF,he2020lightgcn,togashi2021density},
we utilise Recall@20 and nDCG@20~\cite{jarvelin2002cumulated} as the evaluation measures throughout this study.
Table~\ref{table:overall_effectiveness} lists the R@20 and nDCG@20 of each method in the three datasets.
We report the statistical significance between our proposed methods and DREGN-CF (the best-performing baseline) by a paired Tukey HSD test with 95\% CI and Hedge's $g$ as the effect size~\cite{sakai2018laboratory} in terms of nDCG@20.

With all the datasets, our PDE-LGCN and WD-LGCN outperform all the baselines in terms of both R@20 and nDCG@20.
The gain from DREGN-CF is statistically significant in Yelp2018 and Amazon-Book despite using the same model architecture.
For PDE-LGCN, the p-value and effect size are $p<1e\mathrm{-}3$, $g=0.0415$ in Yelp2018 and $p<1e\mathrm{-}3$, $g=0.0520$ in Amazon-Book.
For WD-LGCN, $p<1e\mathrm{-}3$, $g=0.0689$ in Yelp2018 and $p<1e\mathrm{-}3$, $g=0.1194$ in Amazon-Book.
On the other hand, the difference is not significant in Gowalla;
$p=0.244$, $g=0.0301$ for PDE-LGCN and $p=0.528$, $g=0.0197$ for WD-LGCN.
The gap between DREGN-CF and our proposed methods is substantial particularly in the Amazon-Book dataset, which has the largest number of items in the datasets (see also Table~\ref{table:statistics}).
Because the quality of the negative sampling/weighting strategy has a large impact on the ranking performance for such a dataset,
this result suggests that the derived negative sampler/weighting of our approach is more effective than the weighting strategy of DREGN-CF, which is heuristically designed.

By comparing PDE-LGCN and WD-LGCN,
WD-LGCN outperforms PDE-LGCN with statistical significance in Amazon-Book ($p<1e\mathrm{-}3$, $g=0.0954$),
whereas the difference is not significant in Gowalla and Yelp2018;
$p=0.841$, $g=0.0121$ for Gowalla and $p=0.209$, $g=0.0341$ for Yelp2018.
The performance gain of WD-LGCN in Amazon-Book is remarkable.
This suggests that WD-LGCN is effective for a sparse dataset;
it is probably because, by estimating densities on unobserved items (i.e. $\mathcal{I}\setminus\mathcal{I}_u^{+}$),
WD-LGCN can avoid the concentration of the probability mass (i.e. $p_f(i|u)$) assigned to a handful of observed items.
Since a large-scale dataset inevitably experiences the sparsity of observed interactions,
the WD-based approach may be beneficial for large-scale settings.

\begin{table}[t]
  \caption{Comparison of methods in terms of ranking effectiveness. The best-performing baseline is DREGN-CF.}
  \label{table:overall_effectiveness}
  \resizebox{0.48\textwidth}{!}{
    \begin{tabular}{l|cc|cc|cc}
      \hline
      \multirow{2}{*}{Method} & \multicolumn{2}{c|}{Gowalla} & \multicolumn{2}{c|}{Yelp2018} & \multicolumn{2}{c}{Amazon-Book} \\ \cline{2-7}
                 & R@20   & nDCG@20& R@20   & nDCG@20& R@20   & nDCG@20 \\ \hline
      NGCF       & 0.1567 & 0.1325 & 0.0575 & 0.0474 & 0.0342 & 0.0262 \\ 
      Mult-VAE   & 0.1644 & 0.1340 & 0.0589 & 0.0458 & 0.0413 & 0.0306 \\ 
      ENMF       & 0.1512 & 0.1306 & 0.0628 & 0.0515 & 0.0344 & 0.0272 \\ 
      LightGCN   & 0.1828 & 0.1551 & 0.0651 & 0.0532 & 0.0421 & 0.0324 \\
      DREGN-CF   & 0.1828 & 0.1551 & 0.0685 & 0.0564 & 0.0506 & 0.0395 \\ \hline
      PDE-LGCN   & 0.1871    & 0.1575    & 0.0706    & 0.0582    & 0.0542    & 0.0422 \\
      WD-LGCN   & 0.1859    & 0.1567    & 0.0719    & 0.0594    & 0.0580    & 0.0466 \\ \hline
    \end{tabular}}
\end{table}

\begin{table*}[t]
  \caption{Comparison of overall ranking effectiveness among ablated methods.}
  \label{table:ablation}
  \resizebox{0.98\textwidth}{!}{
    \begin{tabular}{lccc|cccccc}
\hline
\multirow{2}{*}{Method}   & \multirow{2}{*}{\begin{tabular}[c]{@{}c@{}}Model\end{tabular}} & \multirow{2}{*}{\begin{tabular}[c]{@{}l@{}}Risk\end{tabular}} & \multirow{2}{*}{norm clipping} & \multicolumn{2}{c}{Gowalla} & \multicolumn{2}{c}{Yelp2018} & \multicolumn{2}{c}{Amazon-Book} \\ \cline{5-10} 
                          &                      &                                                                         &                     & R@20        & nDCG@20       & R@20        & nDCG@20        & R@20          & nDCG@20         \\ \hline
      (a) PDE-MF w/o NC	  & MF	 & PDE & & 0.1377 & 0.1097 & 0.0514 & 0.0431 & 0.0443 & 0.0349 \\	
      (b) PDE-MF	  & MF	 & PDE & \checkmark  & 0.1512 & 0.1224 & 0.0647 & 0.0525 & 0.0504 & 0.0398 \\
      (c) WD-MF	  & MF	  & WD & \checkmark  & 0.1465 & 0.1161 & 0.0607 & 0.0513 & 0.0476 & 0.0401 \\ 
      (d) LightGCN-ANS	  & LGCN & Pairwise &		  & 0.1868 & 0.1568 & 0.0677 & 0.0561 &	0.0504 & 0.0389 \\  
      (e) LightGCN-ANS-NC & LGCN & Pairwise & \checkmark  & 0.1895 & 0.1601 & 0.0698 & 0.0578 & 0.0499 & 0.0386 \\ 
      (f) PDE-LGCN w/o NC & LGCN & PDE &		  & 0.1763 & 0.1493 & 0.0693 & 0.0569 & 0.0526 & 0.0414 \\ \hline
      (g) PDE-LGCN	  & LGCN & PDE & \checkmark  & 0.1871 & 0.1575 & 0.0706 & 0.0582 & 0.0542 & 0.0421 \\ 
      (h) WD-LGCN	  & LGCN & WD & \checkmark  & 0.1859 & 0.1567 & 0.0719 & 0.0594 & 0.0580 & 0.0466 \\ \hline
    \end{tabular}
  }
\end{table*}

\begin{figure*}[t]
    \centering
    \includegraphics[clip,width=\linewidth]{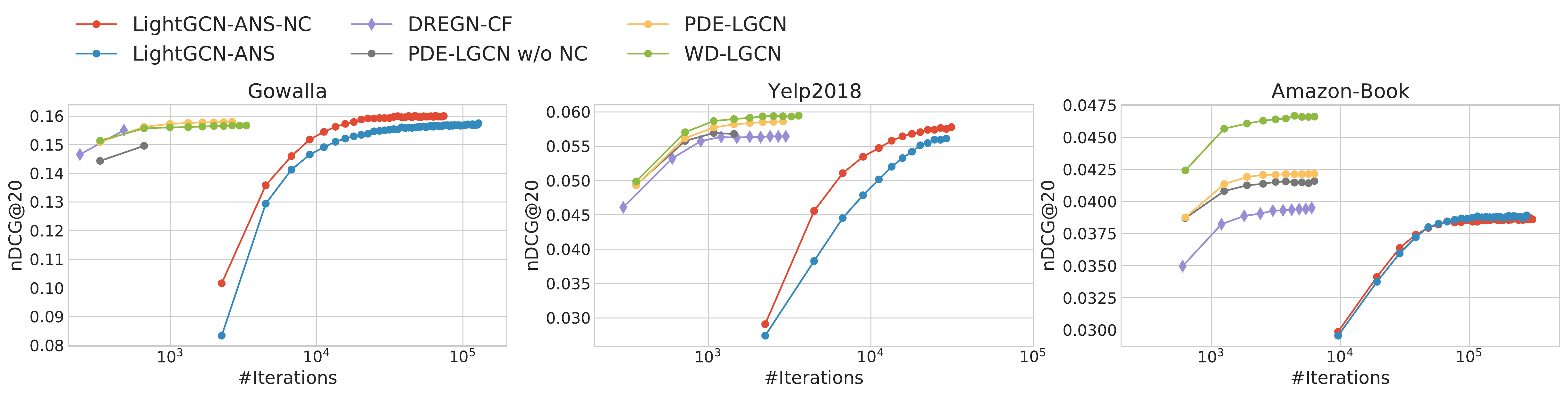}
    \caption{Comparison between LightGCN, DREGN-CF, PDE-LGCN, and WD-LGCN in terms of training efficiency.}
    \label{fig:training-efficiency}
\end{figure*}

\subsection{Ablation Study\label{section:expr-ablation}}
In this section, we compare PDE-LGCN and WD-LGCN with ablated variants of them.
In particular, based on the discussion in Section~\ref{section:rel-pairwise},
we hypothesised that our approach performs similarly to the pairwise approach with an adaptive negative sampler. 
To examine the hypothesis, we implemented an adaptive negative sampling (ANS) technique for the variants based on the pairwise approach.
We first sample negative candidates based on the same sampling strategy for $|I_B|$ discussed in Section~\ref{section:risk-approx},
and then, for each user, re-sample five unobserved items according to the softmax probabilities in a mini-batch (i.e. $\exp(f_u(i))/\sum_{i \in \mathcal{I}_B}\exp(f_u(i))$) based on the predicted scores. 

We list six ablated variants:
(a) MF with the PDE risk without norm clipping (PDE-MF w/o NC);
(b) MF with the PDE risk with norm clipping (PDE-MF);
(c) MF with the WD risk with norm clipping (WD-MF);
(d) LightGCN with the pairwise risk and ANS without norm clipping (LightGCN-ANS);
(e) LightGCN with the pairwise risk and ANS with norm clipping (LightGCN-ANS-NC); and
(f) LightGCN with the PDE risk without norm clipping (PDE-LGCN w/o NC).
We omitted the WD-LGCN and WD-MF without norm clipping because it often fails the model training.

Table~\ref{table:ablation} lists the R@20 and nDCG@20 of each method.
Our proposed methods are labelled (g) PDE-LGCN and (h) WD-LGCN.
We report statistical significance and effect sizes as in Section~\ref{section:expr-effectiveness}.
For the variants with the pairwise risk,
we increase $|\mathcal{U}_B|$ to 4096 for speed and ranking performance.

\subsubsection{Effectiveness of the PDE and WD risks\label{section:effectiveness-pde}}
From the comparison between (e) LightGCN-ANS-NC and each of (g) PDE-LGCN and (h) WD-LGCN,
PDE-LGCN and WD-LGCN show comparable or substantially better performance than LightGCN-ANS-NC.
The improvement is particularly clear in the Amazon-Book dataset ($p<1e\mathrm{-}3$, $g=0.0781$ for PDE-LGCN and $p<1e\mathrm{-}3$, $g=0.1306$ for WD-LGCN).
Our methods also outperform LightGCN-ANS-NC in Yelp2018 although the differences are not significant ($p=0.900$, $g=0.0165$ for PDE-LGCN and $p=0.0576$, $g=0.0388$ for WD-LGCN).
On the other hand, our methods underperform LightGCN-ANS-NC by a relatively small margin in the Gowalla dataset.
However, the differences are not statistically significant ($p=0.597$, $g=0.0359$ for PDE-LGCN and $p=0.259$, $g=0.0447$ for WD-LGCN).

As PDE-LGCN, WD-LGCN and LightGCN-ANS-NC have the similar negative sampling distributions and mini-batch sampling strategy,
this result suggests that our approach is particularly beneficial for use with large datasets. 
For a small dataset, our approach might underperform the pairwise loss with an appropriate sampling distribution and norm clipping technique.

Remarkably, in the Amazon-Book dataset,
(b) PDE-MF and (c) WD-MF outperform most of the conventional neural network-based methods despite it being based on a simple MF (see Table~\ref{table:overall_effectiveness}).
In particular, PDE-MF and WD-MF clearly outperform Mult-VAE, which is based on a DE approach.
This is a surprising result because
(1) PDE-MF and WD-MF are based on simple MF; and
(2) for scalability, our risk estimator defined in Eq.~(\ref{eq:risk-estimator}) and Eq.~(\ref{eq:risk-estimator-ds}) introduces the approximation of $\exp A(f_u)$ which is exactly computed in Mult-VAE.
This result further emphasises the effectiveness of our approach.

By comparing (b) PDE-MF and (c) WD-MF,
PDE-MF outperforms WD-MF in Gowalla and Yelp2018;
$p<1e\mathrm{-}3$, $g=0.0597$ for Gowalla and $p=0.626$, $g=0.0174$ for Yelp2018. 
PDE-MF underperforms WD-MF in Amazon-Book; $p=0.9$, $g=0.0050$.
Considering that the difference is not significant in Yelp2018 and Amazon-Book,
we recommend the use of the PDE risk for MF-based methods.

\subsubsection{Efficiency of the PDE and WD risks\label{section:efficiency-pde}}
Figure~\ref{fig:training-efficiency} shows the comparison between PDE-LGCN, WD-LGCN, DREGN-CF and LightGCN-ANS with and without norm clipping in each dataset.
The x- and y-axes in each figure indicate the number of training iterations in a logarithmic scale and the testing nDCG@20, respectively.
We run the experiments with one NVIDIA Tesla P100 GPU, 16 Intel(R) Xeon(R) CPU @ 2.20GHz, and 60GB of RAM.
For all the datasets, PDE-LGCN and WD-LGCN showed faster convergence than the LightGCN-ANS variants while achieving competitive ranking performance.
PDE-LGCN converges after about 3,000 iterations and this takes 225, 261 and 522 seconds for Gowalla, Yelp2018 and Amazon-Book, respectively.
WD-LGCN converges after about 4,000 iterations and this takes 301, 350 and 696 seconds for Gowalla, Yelp2018 and Amazon-Book, respectively.
DREGN-CF converges with approximately 480, 2,000, and 2,000 iterations in 110, 462, and 982 seconds for Gowalla, Yelp2018, and Amazon-Book, respectively.
Particularly in the Amazon-Book dataset, WD-LGCN and PDE-LGCN outperform DREGN-CF in terms of convergence time.
This is because DREGN-CF takes much longer inference time than our risk estimators;
the ranking uLSIF risk of DREGN-CF requires six terms involving a self-normalised weighted average, whereas our risk estimators have two average terms for each user.
LightGCN-ANS-NC converges with approximately 50,000, 29,120 and 150,000 iterations in 4,843, 3,214, and 33,229 seconds for Gowalla,Yelp2018 and Amazon-Book, respectively.
Our PDE-LGCN and WD-LGCN converge substantially faster than LightGCN with the pairwise loss and an adaptive negative sampler,
and demonstrate comparative training efficiency when compared with DREGN-CF.

As a summary, with considering also the results in Section~\ref{section:effectiveness-pde},
our approach achieves comparative performance and faster convergence than the pairwise counterpart.
Furthermore, it clearly outperforms the DRE-based approach in terms of effectiveness with comparative or faster model training.

\subsubsection{Effectiveness of the norm clipping Technique\label{section:expr-nc}}
Comparing (f) and (g), PDE-LGCN consistently deteriorates without the norm clipping technique, whereas the difference is not significant in Yelp2018 and Amazon-Book. 
$p<1e\mathrm{-}3$, $g=0.106$ for Gowalla, $p=0.636$, $g=0.0335$ for Yelp2018, and $p=0.874$, $g=0.0204$ for Amazon-Book.
On the other hand, by comparing (a) PDE-MF without NC and (b) PDE-MF,
PDE-MF significantly outperforms that without norm clipping in all datasets;
$p<1e\mathrm{-}3$, $g=0.116$ for Gowalla, $p<1e\mathrm{-}3$, $g=0.0639$ for Yelp2018, and $p<1e\mathrm{-}3$, $g=0.0747$ for Amazon-Book.
From these results, the norm clipping technique is beneficial for our proposed approach, whereas the improvement is relatively small with LightGCN.

By introducing LightGCN, our approach further improves the ranking performance from PDE-MF for all datasets;
the results of the comparison between (b) PDE-MF and (g) PDE-LGCN are  
$p<1e\mathrm{-}3$, $g=0.283$ for Gowalla, $p<1e\mathrm{-}3$, $g=0.0639$ for Yelp2018, and $p<1e\mathrm{-}3$, $g=0.0366$ for Amazon-Book.
These results may imply that LightGCN leads to the smoothness of predicted scores by sharing the latent vectors over the user-item graph
and plays a role of inducing the Lipschitz continuity of $f_u$.  
However, the effect of graph-based locality modelling may also be a cause of the significant improvement in Gowalla, which contains the user-venue check-in data collected on a location-based social network~\cite{liang2016modeling}.

For WD-LGCN and WD-MF, we found that the model training is quite challenging without norm clipping;
this may be because the objective based on the WD in Eq.~(\ref{eq:wass-obj}) is not continuous and differentiable for the distributions with disjoint supports without the Lipschitz continuity of $f_u$~\cite{arjovsky2017wasserstein}.
From these results, our norm clipping technique is vital for the proposed WD approach.
Furthermore, due to its applicability, it allows us to take an advantage of various models such as GNNs.

To further investigate the effect of $\bar{n}$,
we demonstrate the hyper-parameter sensitivity of PDE-LGCN and WD-LGCN by varying $\bar{n}$ from $1$ to $9$ while keeping other parameters fixed.
Figure~\ref{fig:sensitivity_n} shows the effect of $\bar{n}$ on the validation performance of PDE-LGCN and WD-LGCN for each dataset.
The x- and y-axes in each figure indicate $\bar{n}$ and validation performance, respectively.
The figures in the top and bottom demonstrate the results of PDE-LGCN and WD-LGCN, respectively.
The red broken line in each figure indicates the performance with $\bar{n}=\infty$ (i.e. PDE-LGCN w/o NC); we omit it for WD-LGCN due to unstable model training.
For all the datasets, a small value ($\bar{n}=1,2,3$) leads to significant deterioration in terms of nDCG@20 for both PDE-LGCN and WD-LGCN.
We can observe improvement with an appropriate value of $\bar{n}$ in each dataset,
whereas the performance deteriorates with a larger $\bar{n}$ and approaches the performance seen with $\bar{n}=\infty$ for PDE-LGCN.
These results also indicate the effectiveness of the norm clipping technique, and therefore,
we recommend that $\bar{n}$ be tuned when using our approach;
the appropriate value of $\bar{n}$ tends to be large when the number of items in a dataset is large.

\begin{figure}[h]
  \begin{subfigure}{0.48\textwidth}
  \centering
  \includegraphics[clip,width=\linewidth]{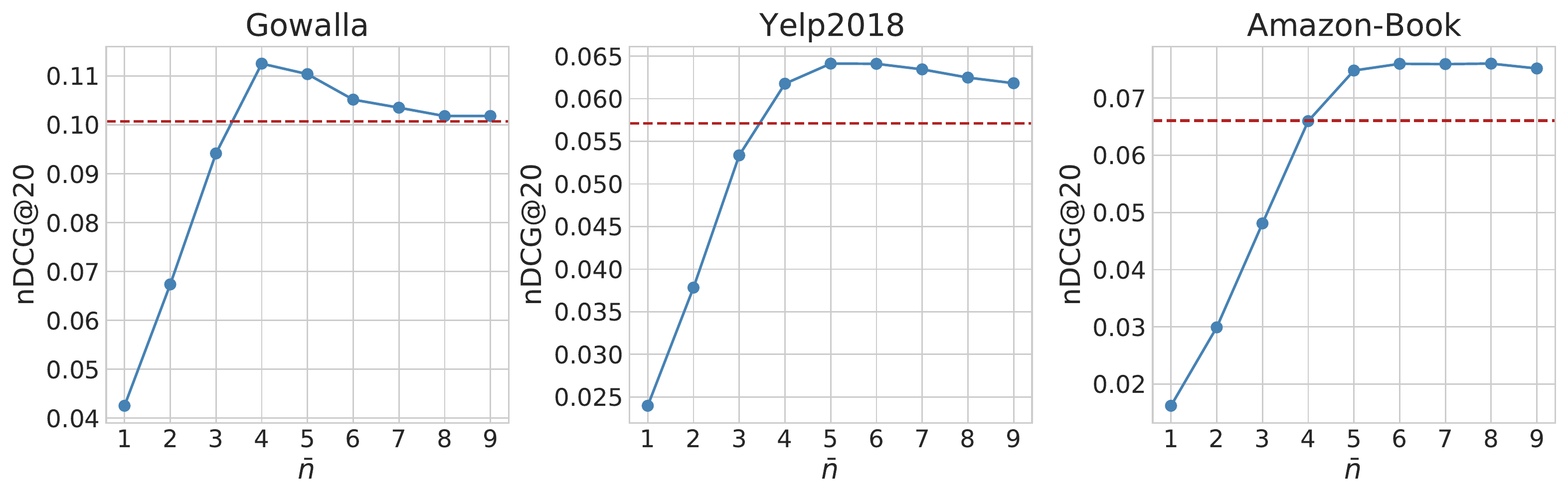}    
\end{subfigure}
\begin{subfigure}{0.48\textwidth}
  \centering
  \includegraphics[clip,width=\linewidth]{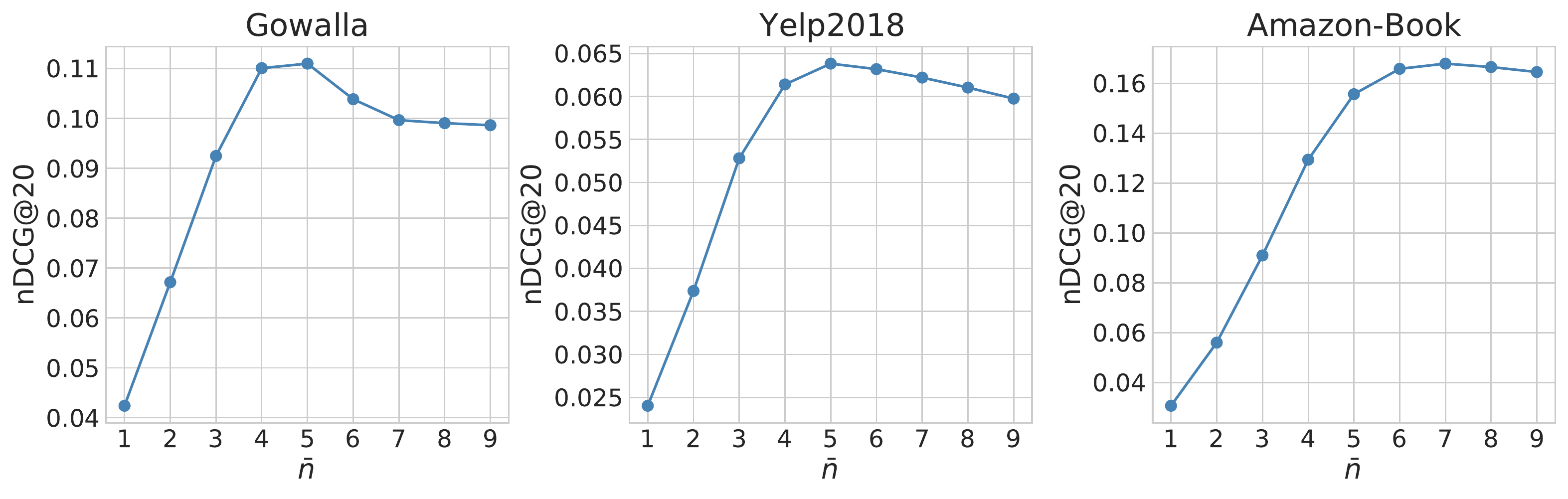}
\end{subfigure}
\caption{Effect of $\bar{n}$ on nDCG@20 of PDE-LGCN and WD-LGCN on the validation split.
The figures in the top and bottom demonstrate the results of PDE-LGCN and WD-LGCN, respectively.
The red broken lines indicate the performance without the norm clipping ($\bar{n}=\infty$); we omit it for WD-LGCN due to unstable model training.
}
\label{fig:sensitivity_n}
\end{figure}

\section{Conclusion}
We proposed a scalable learning-to-rank approach for implicit feedback.
We formulated the problem as the estimation of density functions within an exponential family and developed a loss function based on the penalised MLE of generative distributions of users' positive items.
Our approach includes the following features:
(a) at every training step, the ranker can act as the optimal sampler for itself, thereby omitting extra sampler models; and
(b) the risk estimator efficiently approximates the expectations over ranker-dependent distributions by exploiting the explicit parametric density function.
Furthermore, we proposed a simple regularisation technique for MF and GNNs, namely, norm clipping, which smooths the model predictions and stabilises training.
By deriving the proposed risk from a WD minimisation perspective,
we addressed the essential problem in the ranking learning/evaluation;
we need to know the ranking of only unobserved items.
Using empirical analyses on three real-world datasets, we demonstrated the ranking effectiveness and training efficiency of PDE-LGCN and WD-LGCN.
Our methods achieve comparable or substantially better ranking performance than the conventional methods, while drastically reducing the training time.

As future work, we will further explore the regularisation technique and model architectures based on our proposed risk;
the discussion in Section~\ref{section:rel-wgan} and Section~\ref{section:expr-nc} suggest that the smoothness of model prediction is essential for our approach.
Based on the discussion in Section~\ref{section:rel-wgan},
we will theoretically investigate the duality gap between IRWGAN and our approach due to the non-convex function spaces when using complex models~\cite{bose2020adversarial}.
Since we adopted the static mini-batch sampling,
it would be interesting to examine advanced algorithms~\cite{ding2020simplify,xiong2020approximate} for our approach.

\clearpage
\bibliographystyle{ACM-Reference-Format}
\bibliography{head.blb}

\end{document}